\DeclareMathOperator*{\argmin}{arg\,min}
\title{A Gradient-based Approach for Online Robust Deep Neural Network Training with Noisy Labels}
\theoremstyle{plain}
\newtheorem{theorem}{Theorem}
\newtheorem{lemma}[theorem]{Lemma}
\theoremstyle{definition}
\newtheorem{assumption}[theorem]{Assumption}
\author{%
  Yifan Yang \\
  Department of Computer Science\\
  University of California\\
  Santa Barbara, CA, 93106\\
  \texttt{yifanyang@cs.ucsb.edu} \\
  \And
    Alec Koppel \\
  JP Morgan Chase AI Research\\
New York, NY 10017 \\
  \texttt{alec.koppel@jpmchase.com} \\
  \AND
  Zheng Zhang \\
  Department of Electrical and Computer Engineering \\
  University of California\\
  Santa Barbara, CA, 93106\\
  \texttt{zhengzhang@ece.ucsb.edu} \\
}
\begin{document}

\maketitle
\begin{abstract}
  Learning with noisy labels is an important topic for scalable training in many real-world scenarios. However, few previous research considers this problem in the online setting, where the arrival of data is streaming. In this paper, we propose a novel gradient-based approach to enable the detection of noisy labels for the online learning of model parameters, named Online Gradient-based Robust Selection (OGRS). In contrast to the previous sample selection approach for the offline training that requires the estimation of a clean ratio of the dataset before each epoch of training, OGRS can automatically select clean samples by steps of gradient update from datasets with varying clean ratios without changing the parameter setting. During the training process, the OGRS method selects clean samples at each iteration and feeds the selected sample to incrementally update the model parameters. We provide a detailed theoretical analysis to demonstrate data selection process is converging to the low-loss region of the sample space, by introducing and proving the sub-linear local Lagrangian regret of the non-convex constrained optimization problem. Experimental results show that it outperforms state-of-the-art methods in different settings.
\end{abstract}

\section{Introduction}
Online learning is a widely used learning framework for streaming data in many real-world scenarios. In recent years, online training of deep neural networks (DNNs) has garnered increased attention to enable large-scale training \cite{sahoo2017online, zappone2018online, zhan2020online}, to face the challenge of increasingly large datasets. Such a large-scale training process of DNNs, especially online DNNs training, is highly sensitive to the noisy labels in the datasets \cite{song2022learning}, which is more pronounced with the streaming and dynamically changing online data. 

The noisy label problem refers to the presence of incorrect or mislabeled annotations in a training dataset. Usually, the data samples with incorrect labels are defined as noisy data, and the correct one is called clean data. This issue has been identified as a common challenge in many datasets. For instance, researchers in \cite{northcutt2021pervasive} found 6\% label errors in the Imagenet validation set and 10\% label errors in the QuickDraw dataset. Similarly, up to 30\% label errors were found in the Google Emotions dataset \cite{chen3030} and 37\% errors in the MS COCO dataset \cite{edwin}. Label errors vary across different datasets and appear with varying probabilities of occurrence in data streams at different time slots. 

In recent years, the robustness issue of training with noisy labels has been widely studied in different research areas \cite{song2022learning, ma2022traffic, chen2023dark}. Among various approaches, sample selection methods enjoy the flexibility to support any type of deep learning architecture and do not need to maintain additional neural networks. The concept of multi-round sample selection for scalable models was first introduced in \cite{shen2019learning}, where the authors proposed an iterative training loss minimization (ITLM) method that leverages samples selected at the beginning of each training epoch. Building on this idea, INCV \cite{chen2019understanding} employs cross-validation to detect noisy training data and remove large-loss samples. O2U-Net \cite{huang2019o2u} first repeats the entire training process to collect loss statistics, then retrains the neural network from scratch only with clean samples detected. These works proposed different methods to estimate the ratio of a dataset and using sorting methods to filter out noisy data based on that ratio, but they all follow the same idea of detecting clean samples based on a fixed pre-estimated scale parameter, which is hard to be set for streaming online datasets with changing clean ratios.


In this paper, we introduce {\bf O}nline {\bf G}radient-based {\bf R}obust {\bf S}election ({\bf OGRS}), a novel gradient-based multi-iteration sample selection approach that enables the online training of DNNs with dynamically changing proportions of noisy labels. Since clean data normally produces much lower training loss compared to noisy data based on the observations in \cite{shen2019learning}, our proposed method capitalizes on the significant disparity in the gradient of the training loss at the clean and noisy data points respectively, which initially updates the data selection towards the clean region. To prevent the risk of overfitting, which may arise from the repeated selection of the same samples, we additionally propose a constraint function to mitigate the overlap of selected data. As a result, we formulate the problem as a non-convex constrained optimization problem. This structure enables our approach to dynamically adapt to varying noise proportions, thereby boosting the robustness of the online DNNs training against noisy labels.

In the realm of non-convex constrained optimization, a critical unresolved issue pertains to providing a theoretical guarantee for convergence analysis. Since the recent decade, gradient descent optimization methods have been widely used to solve a wide variety of problems, like the controls of robotic systems \cite{hu2023learning}, bayesian inference \cite{yang2023particle}, recommendation systems \cite{yang2020quantile, chen2021deeppursuit} and the training of DNNs \cite{ma2020statistical, luo2022multisource, wang2023contrastive, zhao2023tensorized,yu2022mr}. While \cite{zhang2023regrets} studied the constrained non-convex optimization problem using quadratic approximations, a straightforward analysis for this problem remains elusive due to the computational intractability of minimizing standard regret in non-convex cases. To address this challenge, we introduce a new metric for non-convex constrained optimization, termed local Lagrangian regret. We conduct a detailed theoretical analysis to validate our approach and show that a constant number of updating steps ensure our method finds a balance between the sample selection performance and the computation expense. 

In the subsequent experimental evaluation, we incrementally input data selected by the OGRS method into various online training models. These results are then benchmarked against state-of-the-art methods to demonstrate the effectiveness of our approach. In general, our main contributions are summarized below:

\begin{itemize}
    \item We introduce a novel gradient-based sample selection approach designed to facilitate effective online DNNs training with dynamically varying clean ratios.
    \item We define a new local Lagrangian regret for the non-convex optimization problem and propose an efficient algorithm that is specifically tailored to address the sample selection problem.
     \item We give a way for theoretical proof of the effectiveness and efficiency of our sample selection methods with our newly defined regret metric. 
\item We conduct experiments by simulating real-world online training cases and make comparisons between different sample selection methods.
\end{itemize}

				
				
				

\section{Related Work}

In this section, we review related work in the areas of learning with noisy labels and online DNNs training. Over the past decades, numerous deep-learning techniques have been developed to tackle the noisy label problem. These techniques are primarily grouped into five categories \cite{song2022learning}.

The first group of methods encompasses sample selection, which includes techniques of multi-network learning and multi-round learning. The multi-network learning involves the mentor network in the case of collaborative learning and multi-network learning in the case of co-training. For instance, \cite{malach2017decoupling} trained multiple DNNs simultaneously, with updates based solely on disagreements between different DNNs. On the other hand, MentorNet \cite{jiang2018mentornet} employs a mentor network to guide the training of the student network. Multi-round learning, another sample selection method, refines a selected set of clean samples at the start of each epoch \cite{shen2019learning, huang2019o2u}. Alongside methods using the small-loss trick \cite{shen2019learning, huang2019o2u} introduced in the previous section, others improve the system efficiency using a single round refinement, like \cite{wang2018iterative} and \cite{wu2020topological}. These techniques do not require the maintenance of additional DNNs, hence providing flexibility across various model architectures. In this paper, we mainly compare our results with the ITLM method in \cite{shen2019learning}, since it refines the selection set for each epoch with the small-loss trick, which is the most suitable type of related method for online robust training. The other results, like \cite{wang2018iterative}, only do a single-time selection, it naturally unsuitable for the online setting.

Additionally, certain studies have sought to design robust architectures that incorporate a noise adaptation layer at the top of the training model. Recently, such methods have been adapted to handle noisy labels \cite{xiao2015learning, chen2015webly}. Webly learning, for instance, uses the confusion matrix to initialize the weights of the noise adaptation layer \cite{chen2015webly}. In \cite{he2022robust, he2023robust}, researchers designed robust policy gradient descent methods to deal with model uncertainties. Other methodologies include robust regularization \cite{shorten2019survey, krogh1991simple}, robust loss function \cite{manwani2013noise, ghosh2017robust}, and closed-loop control \cite{chen2021towards}. Furthermore, it has been proven that designing a robust loss function for noisy data can approach the Bayesian optimal classifier \cite{natarajan2013learning}. Loss adjustment, instead of designing a new robust loss function, modifies the loss for all examples prior to the training process.

While some studies have considered online DNNs training \cite{xu2009does, sahoo2017online, sharkawy2018variable} and robust optimization \cite{he2020data, he2023data}, none have explored the problem of training with noisy labels. The inherent difficulty of online robust training lies in the dynamic nature of data streams. Existing techniques are unfit for the online scenario as it is computationally intractable to consistently modify parameters in traditional training techniques with noisy labels, such as sample selection and robust architecture. Hence, the new gradient-based sample selection method without the need for such pre-defined parameters is introduced in this paper.

\section{Framework and Preliminaries}
\begin{wrapfigure}{r}{0.5\textwidth}
\centering 
\vspace{-15mm}
\includegraphics[width=0.5\textwidth]{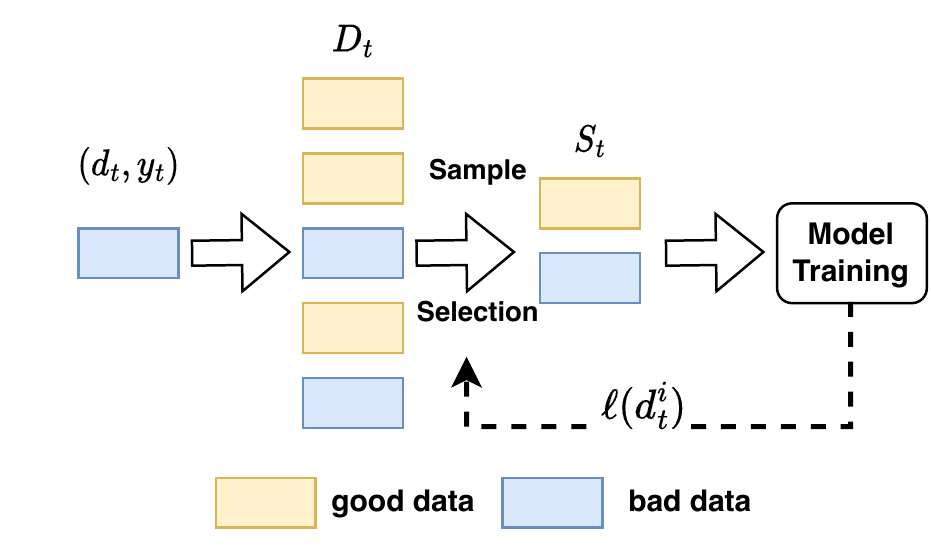}
\caption{Workflow for the OGRS}
\label{fig:workflow}
\end{wrapfigure}
In this section, we introduce the general framework for the multi-round robust training of DNNs in an online setting with time slots $t\in[1, T]$. During the training process, the online streaming data arrives at the training set $\mathcal{D}_t$ at each time slot $t$, where $\mathcal{D}_t = {(d_{1}, y_{1}), \cdots, (d_{t}, y_{t})}$ consists of a series of data pairs $(d_{t}, y_{t})$. For a single data pair $(d_{t}, y_{t})$, $d_{t}\in\mathbb{R}^d$ denotes the $d$-dimensional input features, and $y_{t}\in \mathbb{R}^C$ represents the corresponding label for $d_t$. Since we consider the training problem with noisy data, there exists a proportion of $\phi_t$ labels that are mistaken in $\mathcal{D}_t$ at the time slot $t$. 
\subsection{Problems in Directly Transferring Previous Method into Online Setting}\label{sec:itlm}
The multi-round sample selection problem considered in this paper is a traditional problem that has been widely studied since \cite{rousseeuw1984least}, which contains two parts: (1) how to select a clean sample set $S_t$ (2) how to use $S_t$ to train a neural network with parameters $\theta_t$. In Fig. \ref{fig:workflow}, we summarize the workflow of the multi-round sample selection problem for a single time slot $t$, where data pair $(d_{t}, y_{t})$ arrives at $\mathcal{D}_t$ and the sample selection algorithm select sample set $S_t$ with the loss information provided by the training model to update the model parameter $\theta_t$.

In this subsection, we will show that directly transferring some traditional sample selection methods like ITLM \cite{shen2019learning} in an online training setting didn't work well. To fit the ITLM into an online training framework, we need iteratively refines the clean set $S_t$ at the beginning of each time slot $t$. Since data with mistaken labels usually have high training loss based on observations. We sort all data samples $d\in\mathcal{D}_t$ based on their training loss $\ell_{\theta_t}(d)$ in their ITLM method. Then, the clean set $S_t$ is selected by trimming the top $1 - \hat{\phi}_t$ proportion of the sorted data list, where the pre-known clean ratio $\phi_t$ can be estimated by some techniques like crossed-validation \cite{yu2018efficient, northcutt2021confident} before starting the sample selection at each time $t$. Since there should have a total number of $t$ samples in $\mathcal{D}_t$ at time slot $t$, this process can be formulated as:
\begin{align}
    S_t \leftarrow \argmin_{S:|S| = \lfloor\phi_t t \rfloor} \sum_{d_i\in S}\ell_{\theta_t}(d_i),
\end{align}
 To update $\theta_t$ with the clean set $S_t$, the following optimization problem is solved in their training process:
\begin{align}
    \theta_{t+1}:=\argmin_\theta \ell_{\theta_t}(s_i),
\end{align}
where $s_i\in S_t$ is a batch of $K$ data samples stochastically sampled from the clean set $S_t$. Even though the online ITLM idea may work, it is obvious that directly transferring the previous method into the online setting is computationally intractable since we need to re-estimate the clean ratio at the beginning of each iteration.

\subsection{Problems for Using Traditional Local Regret Metric in Our Method}\label{sec:local}
For the sample selection part of our method, instead of using the sorting method in the ITLM, we introduce a novel sample selection algorithm based on a non-convex constrained gradient descent algorithm. To select $k$-th data sample $d_{t, k}$ in the clean sample set $S_t$ at time slot $t$, we run the constant number of gradient descent steps with iterations $i\in[1, M]$. Thus, by repeatedly running this selecting process for $K$ times, we can directly obtain a set of $S_t = \{d_{t, 1}, \cdots, d_{t, K}\}$.

To illustrate the idea of local regret metric in \cite{hazan_efficient_2017}, we focus on the updating process for selecting a single data $d_{t, k}$, which bypasses a series of decisions $d^1, \cdots, d^{M-1}, d^M_{t, k}$. To derive the local regret, we can gauge the average loss $L_{t, w}(d_t^i)$ of current $w$ time slots loss values via a sliding window, which keeps track of the algorithm performance at the point of the current decision $d^i_t$ at the iteration $i$. The definition for the $w$-local regret is established by summing up the average gradient of the local loss over a total of $M$ rounds, as described in the following equation:
\begin{align}
    R_w(M):=\sum_{i=1}^M\|\nabla L_{t, w}(d_t^i)\|^2,
\end{align}
where the averaged local loss can be calculated by $L_{t, w}(d_t^i):=\frac{1}{w}\sum_{j=0}^{w-1}\ell_{t -j}(d_t^i)$. Besides the new definition of local regret, Hazan also proposed efficient new algorithms with the local loss and gives a sublinear local regret bound in their theoretical analysis. 

Nevertheless, the original local regret metric was designed to address the unconstrained non-convex optimization problem. We cannot directly apply this setting to our sample selection problem, as we need to incorporate a constraint function to prevent the oversampling of certain samples.

\section{Algorithms}
As previously outlined, we have introduced the overall structure of our online robust training system designed for handling noisy labels. In this section, we will delve into more detailed discussions of the two main parts of the OGRS method, the sample selection process, and the model training aspects.

\subsection{Gradient-based Sample Selection}\label{sec:ss}
The key component of the OGRS is selecting the set of clean samples $S_t$ at the beginning of each time slot $t$. In order to obtain $S_t$, we repeat the updating steps for selecting a single low-loss sample $d_{t, k}$ for $K$ times, where the updating steps $d^1, d^2, \cdots, d^{M-1}$ toward the final choice $d_{t, k}$ employ a modified constrained gradient descent method. Detail of the gradient-based sample selection algorithm can be found in Alg. \ref{alg:selection}.

To deal with the non-convex loss function, we use the local loss $L_{t, w}(d)$ introduced in Sec. \ref{sec:local}. In order to avoid repeatedly selecting the same samples, a global constraint function $g_t^i$ is set as the difference between the total number of selected times for a sample $d^i_t$ and a threshold $\zeta$:
\begin{align}
    g^i_t(d^i_t) = p^i_t(d^i_t) - \zeta,
\end{align}
where $p^i_t$ indicates the total number of times that $d^i_t$ has been selected at the time slot $t$ and iteration $i$. $\zeta$ is the maximum allowance for the repeat times. As a result, we can model the sample selection part as a constrained optimization problem:
\begin{align}\label{eq:aim1}
    d_{t, k} \in \argmin_{d\in\mathcal{D}_t} L_{t, w}(d) \quad s.t. \quad g^i_t(d^i_t) \leq 0
\end{align}
To simplify the proof, we ignore the time slot $t$ and only focus on a single optimization problem with iteration $i\in[1, M]$ in the following part of this section. A widely used approach to solve the constrained optimization problem is to build the Lagrangian function that associates the loss with the constraints through a dynamically updating Lagrangian multiplier $\mu^i \in R^+$ \cite{yang4040704vflh}. To enable the non-convex constrained optimization problem, we introduce a Lagrangian function built on the local loss instead, called local Lagrangian:
\begin{align}
    \mathcal{L}^i_w(d^i, \lambda^i) = L_{ w}(d^i) +(\mu^i)^\top g^i(d^i)
\end{align}
To optimize the local Lagrangian, we introduce a modified saddle point approach, which updates decision $d^i$ in the primal update and $\mu^i$ in the dual update. Thus, the sample $d^{i+1}$ can be updated as the minimizer of the following optimization problem:
\begin{align}\label{eq:primal_update}
\min_{d \in \mathcal{D}_t} \nabla^\top L^{i}_w (d - d^i) + (\mu^{i+1})^\top g^{i}(d) + \frac{\|d - d^i\|^2}{2\alpha},
\end{align}
where $\frac{\|d - d^i\|^2}{2\alpha}$ is an added regularizer and $\alpha$ is the positive stepsize. As the current decision $d^i$ is revealed, the Lagrangian multiplier is updated based on the observation of $g^i(d^i)$ as:
\begin{align}
\mu^{i+1}=\left[\mu^i+\gamma g^i\left(d^i \right)\right]^{+}
\end{align}
To provide the theoretical guarantee of the non-convex constrained optimization algorithm, we introduce the novel local Lagrangian regret, which is defined by combining the gradient of the local loss and the constraints. The idea of the local Lagrangian regret comes from the Karush–Kuhn–Tucker (KKT) stationary conditions, as:
\begin{align}
    RL =  \|\sum_{i=1}^M \nabla L_{w}(d^i) +(\mu^i)^\top\nabla g^i(d^i)\|
\end{align}
We will show later that the proposed local Lagrangian regret can help us better understand the theory of our sample selection method. One thing that needs to be noted. Different from the traditional setting of an optimization problem that iterates the gradient descent until the current decision is close enough to the optimal decision, we only run our algorithm for constant steps. The constant steps of updates help us to reduce the over-fitting problem during the training process and are enough to guarantee a sample loss below a certain threshold. 
\subsection{Online Model Training}
After we select $S_t$ by using the proposed method. We update the model parameters $\theta_t$ by solving the following optimization problem:
\begin{align}
    \theta = \argmin_{\theta}\min_{\mathcal{S}_t}\sum_{t=1}^T\sum_{s\in\mathcal{S}_t} \ell_\theta(s)
\end{align}
These processes can be conducted by some widely used optimization methods, like the stochastic gradient descent (SGD) or the alternating direction method of multipliers (ADMM). The detail of the training process is summarized in Alg. \ref{alg:training}.

\begin{algorithm}[ht]
\caption{Gradient-based Sample Selection} 
\label{alg:selection}
\hspace*{0.02in} {\bf Input:} 
loss function $\ell_{\theta_t}(\cdot)$ of recent $w$ iterations, dataset $\mathcal{D}_t$, repeat threshold $\zeta$\\
\hspace*{0.02in} {\bf Output:} 
$S_t$
\begin{algorithmic}[1]
\For{$k\in[1, K]$}
\For{$i\in[1, M]$ epoch} 
\State Update the selected sample $d^i$ by minimizing the following optimization problem:
\begin{align}
   \min_{d \in \mathcal{D}_t} \nabla^\top L^{i}_{t, w} (d - d_t^i) + (\mu_t^{i+1})^\top g^{i}_t(d) + \frac{\|d - d_t^i\|^2}{2\alpha},
\end{align}
\State Observe the constraint violation $g_t^{i}(d_t^{i})$, where:
\begin{align}
       g^i_t(d^i_t) = p^i_t(d^i_t) - \zeta
\end{align}
\State Update the dual variable $mu_{i+1}$ by the following equation:
\begin{align}\label{eq:dual}
  \mu_t^{i+1} = [\mu_t^i + \gamma g_t^{i}(d_t^i)]^{+}
\end{align}
    \EndFor
      \State Append sample $d^{M}$ in $S_t$
    \EndFor
\State Return selected dataset $S_t$
\end{algorithmic}
\end{algorithm}

\begin{algorithm}[ht]
\caption{Online Gradient-based Robust Training} 
\label{alg:training}
\hspace*{0.02in} {\bf Input:} 
dataset $\mathcal{D}_t$\\
\hspace*{0.02in} {\bf Output:} 
model parameter $\theta_t$
\begin{algorithmic}[1]
\State Initialize the model parameters $\theta_0$
\For{$t = 1, \cdots, T$} 
\State $S_t =$ Algorithm \ref{alg:selection} $(\mathcal{D}_t, \theta_t)$
\State Update model parameter $\theta_t$ according to $S_t$
  \EndFor
\end{algorithmic}
\end{algorithm}

\section{Theoretical Analysis}

In this section, we present a theoretical analysis of the sample selection component of the OGRS, demonstrating the reliability of our method via local Lagrangian regret. We focus specifically on the process of selecting a single sample $d_{t, k}$ at time $t$. Note that the optimization strategy outlined in eq. \eqref{eq:aim1}, aimed towards the optimal, may be overly ambitious for sample selection and could potentially lead to severe overfitting issues. Consequently, we limit ourselves to a constant number of total iterations $M$ and show that constant $M$ can also give a guarantee of maximum Lagrangian residual.

Next, we first establish the regret bound in a general scenario tied to $M$. Subsequently, we delve into further exploration with a fixed setting of $M$, illustrating our method's performance. This approach is akin to setting a threshold for the maximum sample loss, a concept that aligns with previous sample selection methods that leverage the clean ratio to manage this threshold. However, we can directly use OGRS on different tasks without necessitating the configuration of specific parameters.

Before presenting the local regret bound for the Lagrangian Residual, we first enumerate some frequently employed assumptions.

\begin{assumption}\label{asp:bound}
(Bounded gradient) For every iteration $i$, both $L_{t, w}(d)$ and $g^i(d)$ are bounded and have bounded gradient, which is given as $\left|L_{t, w}(x)\right| \leq F$, $\left|\nabla L_{t, w}(x)\right| \leq F_1$, $|g^i(d^i)| \leq G $, $\left|\nabla g^i(x)\right| \leq G_1$, $\left|\nabla^2 g^i(x)\right| \leq G_2$
\end{assumption}

\begin{assumption}\label{asp:lip}
(Lipschitz continuous) The averaged loss function $L_w$ and the constraints $g^i$ are Lipschitz smooth, so its derivatives are Lipschitz continuous with constant $L_1$ and $L_2$, i.e., for two real vector $d_i, d_j \in \mathcal{D}_t$, we have:
\begin{align}
    \left|L_w(d^i)-L_w(d^j)\right| & \leq L_1 \|d_1-d_2\|\\
    \left|g^i(d^i)-g^i(d^j)\right| & \leq L_2 \|d_1 - d_2\|
\end{align}
\end{assumption}

\begin{assumption}\label{asp:boundd}
    (Bounded decision set) The sample set $\mathcal{D}_t$ is bounded, which means for some constant $D$ and any $d^i, d^j$, we have $\|d^i - d^j\| < D$
\end{assumption}

\begin{assumption}\label{asp:salter}
(Salter Condition) There exists some positive constant $\epsilon$ and an interior point $d\in\mathcal{D}$, such that $g^i(d) \leq -\epsilon \bm{I}$.
\end{assumption}

Assumptions \ref{asp:bound} are broadly employed in the non-convex optimization community. Assumption \ref{asp:lip} is crucial to ensure the validity and reasonableness of our analyses. Moreover, the Slater condition in assumption \ref{asp:salter} is instrumental in establishing the boundary of the Lagrangian multiplier. We begin by establishing a bound for the norm of Lagrangian multiplier $\|\mu^i\|$, as outlined in the following two lemmas:

\begin{lemma}\label{lemma:step}
Let $i_0$ be some arbitrary integer and $\theta$ be some real constants. For $i\in[1, M]$, the following bound holds:
\begin{align}
    & |\|\mu^{i+1}\| - \|\mu^i\|| \leq G\gamma\\ \label{eq:step}
    & |\|\mu^{i+i_0}\| - \|\mu^i\|| \leq -\frac{\epsilon i_0}{2},
\end{align}
when $\|\mu^i\| \geq \frac{\gamma^2  G^2}{\epsilon} + \frac{4\gamma F_1 D}{\epsilon} + \frac{D^2}{2\alpha\epsilon} $
\end{lemma}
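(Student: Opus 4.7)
The plan is to prove the two bounds separately. The first is an almost immediate consequence of the dual update and the non-expansiveness of projection; the second rests on a Slater-based saddle-point comparison showing that the primal subproblem drives $g^i(d^{i+1})$ strictly negative whenever the dual variable is large, so that the dual update then forces $\|\mu\|$ to decrease at a steady rate.

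For the first inequality, I would start from $\mu^{i+1}=[\mu^i+\gamma g^i(d^i)]^+$ in \eqref{eq:dual}. Because $\mu^i\geq 0$ we have $\mu^i=[\mu^i]^+$, and projection onto the non-negative orthant is $1$-Lipschitz, so
\[
\|\mu^{i+1}-\mu^i\|=\bigl\|[\mu^i+\gamma g^i(d^i)]^+-[\mu^i]^+\bigr\|\leq \gamma\|g^i(d^i)\|\leq \gamma G,
\]
by Assumption~\ref{asp:bound}. The reverse triangle inequality $\bigl|\|\mu^{i+1}\|-\|\mu^i\|\bigr|\leq\|\mu^{i+1}-\mu^i\|$ then gives the claim.

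For the second inequality, the plan is to first establish a per-step drift lemma: whenever $\|\mu^i\|$ lies above the stated threshold, $g^i(d^{i+1})\leq-\epsilon/2$, hence the dual update produces the decrease $\|\mu^{i+1}\|\leq\|\mu^i\|-\gamma\epsilon/2$. To prove the drift I would exploit the optimality of $d^{i+1}$ in the primal subproblem \eqref{eq:primal_update} by comparing its objective value to that at a Slater interior point $\tilde d$ with $g^i(\tilde d)\leq-\epsilon$ (Assumption~\ref{asp:salter}). Dropping the non-negative squared-distance term on the left and applying Assumptions~\ref{asp:bound} and~\ref{asp:boundd} to bound the gradient term by $F_1 D$ and the proximal term by $D^2/(2\alpha)$ yields
\[
(\mu^{i+1})^\top g^i(d^{i+1})\leq -\epsilon\|\mu^{i+1}\|+F_1 D+\tfrac{D^2}{2\alpha}.
\]
Dividing through by $\|\mu^{i+1}\|$ and invoking Part~1 to guarantee $\|\mu^{i+1}\|\geq\|\mu^i\|-\gamma G$ is still large enough (this is precisely why the threshold absorbs a $\gamma^2 G^2/\epsilon$ term and a $4\gamma F_1 D/\epsilon$ term rather than the bare $2F_1 D/\epsilon$ and $D^2/(\alpha\epsilon)$ one would get from a naïve calculation), the residual terms collapse to at most $\epsilon/2$, giving $g^i(d^{i+1})\leq-\epsilon/2$. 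Feeding this into the dual update then yields the single-step decrease.

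Finally, I would iterate the drift across $i_0$ rounds: by induction on $j\in\{0,\ldots,i_0-1\}$, $\|\mu^{i+j}\|$ remains above the threshold because each step lowers it by only $\gamma\epsilon/2$, so the per-step decreases compound to $\|\mu^{i+i_0}\|-\|\mu^i\|\leq-\gamma\epsilon i_0/2$, which is the content of \eqref{eq:step} (the statement appears to suppress the factor $\gamma$). The main obstacle I expect is precisely the primal-dual comparison that produces the per-step drift: one must carefully align the $\gamma G$ slack from Part~1, the $F_1 D$ gradient correction, and the $D^2/(2\alpha)$ proximal contribution with the three components of the threshold $\tfrac{\gamma^2 G^2}{\epsilon}+\tfrac{4\gamma F_1 D}{\epsilon}+\tfrac{D^2}{2\alpha\epsilon}$, and this bookkeeping is what pins down the specific constants in the statement. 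Once the threshold is chosen correctly, the induction across the window of length $i_0$ is routine.
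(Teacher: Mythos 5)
Your Part 1 is exactly the paper's argument (non-expansiveness of $[\cdot]^+$ plus the reverse triangle inequality), and your starting point for Part 2 --- comparing the primal subproblem's objective at its minimizer against a Slater point $\tilde d$ with $g^i(\tilde d)\le-\epsilon$ to extract a $-\epsilon\|\mu^{i+1}\|$ term --- is the same ingredient the paper uses. From there the routes diverge, and yours has a genuine gap. You convert the Slater comparison into a \emph{per-step} statement ($g\le-\epsilon/2$ whenever the dual variable exceeds the threshold, hence $\|\mu\|$ drops by $\gamma\epsilon/2$ per step) and then compose $i_0$ such steps by induction. The induction does not close: the hypothesis you need at step $j$ is that $\|\mu^{i+j}\|$ is still above the threshold, but the very drift you are establishing pushes $\|\mu^{i+j}\|$ \emph{down} by $j\gamma\epsilon/2$, so for $i_0$ large enough the iterate crosses the threshold partway through the window and the per-step decrease is lost for the remaining steps. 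Your justification (``remains above the threshold because each step lowers it by only $\gamma\epsilon/2$'') is circular --- small decreases still accumulate. To make your route work you would need to strengthen the hypothesis to $\|\mu^i\|\ge\text{threshold}+i_0\gamma\epsilon/2$, or weaken the conclusion to a max-type bound. The paper sidesteps this entirely: it writes the drift at the level of $(\mu^{\tau+1})^2-(\mu^\tau)^2$, sums that inequality \emph{unconditionally} over the whole window $\tau\in\{i,\dots,i+i_0-1\}$ (telescoping the proximal terms and using Part 1 to relate each $\mu^{\tau+1}$ back to $\mu^i$), and invokes the threshold hypothesis only once, at the end, to complete the square $(\mu^{i+i_0})^2\le(\mu^i-\epsilon i_0/2)^2$. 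That windowed squared-norm argument, in the style of Lemma 5 of \cite{yu2017online}, is precisely the device that delivers an $i_0$-step drift without assuming intermediate iterates stay large, and it is the missing idea in your proposal.

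Two smaller points. First, your route yields a decrease of $-\gamma\epsilon i_0/2$; you flag this as the statement ``suppressing'' $\gamma$, but it is a material discrepancy because the drift constant $\epsilon/2$ (not $\gamma\epsilon/2$) is what is fed into Lemma \ref{lemma:mul}, so the factor cannot simply be absorbed. Second, your drift controls $g^i(d^{i+1})$ (the constraint at the primal minimizer), whereas the dual update \eqref{eq:dual} is driven by $g^i(d^i)$; closing that one-step index mismatch requires either re-indexing the argument or a bound on how $g^i$ varies between consecutive iterates, neither of which you supply.
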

\begin{proof}
     The detail of the proof can be found in Appendix \ref{app:step}.
\end{proof}

Subsequently, we establish the bound for the norm of the Lagrangian Multiplier with lemma \ref{lemma:step}.

\begin{lemma}\label{lemma:mul}
Let Assumption \ref{asp:salter} be satisfied. For the Lagrangian multiplier $\mu^i$, we can bound its norm as:
\begin{align}
    \|\mu^i\| &\leq  \frac{M^{-\frac{1}{2}}  G^2}{\epsilon} + \frac{4M^{-\frac{1}{4}} F_1 D}{\epsilon} + \frac{D^2}{2\alpha\epsilon} + i_0 G M^{-\frac{1}{4}}+ i_0\frac{8G^2M^{-\frac{1}{2}}}{\epsilon}\log[\frac{32G^2M^{-\frac{1}{2}}}{\epsilon^2}]
\end{align}
\end{lemma}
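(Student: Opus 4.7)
The plan is a deterministic drift-plus-jump argument that chains the two bounds of Lemma \ref{lemma:step}: the per-step bound $|\|\mu^{i+1}\|-\|\mu^i\|| \leq G\gamma$ controls growth, while the drift bound \eqref{eq:step} pulls the multiplier back down whenever it exceeds the Slater-induced threshold $B_1 := \gamma^2 G^2/\epsilon + 4\gamma F_1 D/\epsilon + D^2/(2\alpha\epsilon)$. Reading the $M^{-1/2}$ and $M^{-1/4}$ scalings off the claimed bound, I would set $\gamma = M^{-1/4}$, so that $\gamma^2 = M^{-1/2}$ and $G\gamma = G M^{-1/4}$ align with the three threshold terms and the overshoot term appearing in the claim.

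The main step is an induction on $i$ with a case split on $\|\mu^{i-i_0}\|$. If $\|\mu^{i-i_0}\| \leq B_1$, the per-step bound telescopes to $\|\mu^i\| \leq \|\mu^{i-i_0}\| + i_0 G\gamma \leq B_1 + i_0 G\gamma$. If instead $\|\mu^{i-i_0}\| > B_1$, the drift bound gives $\|\mu^i\| \leq \|\mu^{i-i_0}\| - \epsilon i_0 / 2$, and combining with the induction hypothesis closes the step provided the block length $i_0$ exceeds $2 G\gamma/\epsilon$, so that the negative drift outweighs the worst-case accumulated single-step jumps. This coarse induction already delivers the four non-logarithmic contributions $B_1 + i_0 G\gamma$ in the claim.

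To recover the remaining term $i_0 \frac{8G^2\gamma^2}{\epsilon} \log\bigl[\frac{32 G^2\gamma^2}{\epsilon^2}\bigr]$, I would sharpen the deterministic induction by an exponential-Lyapunov analysis of $V^i := \exp(\eta \|\mu^i\|)$ with $\eta$ of order $\epsilon/(G^2\gamma^2)$. The jump bound gives $V^{i+1} \leq V^i e^{\eta G\gamma}$ with a second-order correction on the scale of $\eta^2 G^2 \gamma^2$, and the drift bound gives $V^{i+i_0} \leq V^i e^{-\eta \epsilon i_0 / 2}$ above threshold. Balancing these factors in steady state, taking logarithms, and dividing by $\eta$ converts the logarithmic factor into an additive correction on the bound for $\|\mu^i\|$ of exactly the form in the claim.

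The main obstacle will be fixing the precise numerical constants $8$ and $32$ inside the logarithm: the drift-dominates-jump story is routine once $\gamma$ and the block length $i_0$ are chosen, but pinning down these constants requires carrying the second-order Taylor expansion of $e^{\eta G\gamma}$ and balancing it accurately against the first-order drift in the exponential-Lyapunov inequality. Everything else --- the choice $\gamma = M^{-1/4}$, the induction, and the case split on the previous value of the multiplier --- is mechanical once that key estimate is in hand.
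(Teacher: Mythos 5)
Your approach is essentially the paper's: the paper's entire proof consists of taking the two preconditions you use (per-step jump bounded by $G\gamma$, negative drift $-\epsilon i_0/2$ above the Slater-induced threshold) from Lemma \ref{lemma:step}, invoking the drift lemma (Lemma 5 of Yu et al., 2017) as a black box, and substituting $\gamma = M^{-1/4}$. Your exponential-Lyapunov argument with $\eta \sim \epsilon/(G^2\gamma^2)$ is precisely the proof of that cited lemma, and the constants $8$ and $32$ drop out as $4\delta_{\max}^2/\zeta$ and $8\delta_{\max}^2/\zeta^2$ with $\delta_{\max} = G\gamma$ and $\zeta = \epsilon/2$, so you are simply inlining what the paper cites.
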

\begin{proof}
     The detail of the proof can be found in Appendix \ref{app:mul}.
\end{proof}

Finally, we proceed to present the proof of the local Lagrangian residual regret. Commencing from the stationary condition in the KKT conditions, we individually bind the components associated with the gradient of the loss function and the constraints. This procedure brings us to the subsequent Theorem:
\begin{theorem}\label{the:lag}
Let $L_{t, w}$ be the local loss function and $g^1, \cdots, g^M$ be the constraint functions in Alg. \ref{alg:selection} and all assumptions satisfied. Set $H^i = \nabla L_w + \mu^i\nabla g^i(d^i)$ and involving the results in lemma \ref{lemma:mul},  we have:

\begin{align}
     & RL(M) = \|\sum_{i=1}^M \nabla L_w + (\mu^i)^\top\nabla g^i(d^i)\| \leq \mathcal{O}(M^{\frac{1}{2}})
\end{align}

where $M$ is the maximum number of the constraints functions
\end{theorem}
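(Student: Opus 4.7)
The plan is to decompose the cumulative Lagrangian residual $\sum_{i=1}^M H^i$ into a telescoping piece plus two error pieces, each of which is controllable via Assumptions~\ref{asp:bound}--\ref{asp:boundd}, the per-step dual change bound from Lemma~\ref{lemma:step}, and the uniform multiplier bound from Lemma~\ref{lemma:mul}.

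First, I would write down the first-order optimality condition for the primal update in eq.~\eqref{eq:primal_update}, which (ignoring the feasible-set boundary) yields $\nabla L_w(d^i) + \mu^{i+1}\nabla g^i(d^{i+1}) = -(d^{i+1}-d^i)/\alpha$. Adding and subtracting $\mu^{i+1}\nabla g^i(d^{i+1})$ in $H^i$ produces the identity
\begin{align*}
H^i \;=\; -\tfrac{1}{\alpha}(d^{i+1}-d^i) \;+\; (\mu^i - \mu^{i+1})\,\nabla g^i(d^{i+1}) \;+\; \mu^i\bigl[\nabla g^i(d^i)-\nabla g^i(d^{i+1})\bigr].
\end{align*}
Summed over $i=1,\dots,M$, the first term telescopes to $(d^1-d^{M+1})/\alpha$, whose norm is at most $D/\alpha$ by Assumption~\ref{asp:boundd}.

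Second, I would bound the two remaining sums by the triangle inequality. The dual-update piece uses $|\mu^{i+1}-\mu^i|\le \gamma G$ from Lemma~\ref{lemma:step} together with $\|\nabla g^i\|\le G_1$ from Assumption~\ref{asp:bound}, contributing at most $M\gamma G G_1$. The smoothness piece uses the Hessian bound $\|\nabla^2 g^i\|\le G_2$ from Assumption~\ref{asp:bound} to give $\|\nabla g^i(d^i)-\nabla g^i(d^{i+1})\|\le G_2\|d^{i+1}-d^i\|$, combined with the primal step-length estimate $\|d^{i+1}-d^i\|\le \alpha(F_1+\|\mu^{i+1}\|G_1)$ read off the same optimality condition, and the uniform multiplier bound $\|\mu^i\|\le U$ from Lemma~\ref{lemma:mul}. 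This contributes at most $M\alpha U G_2 (F_1 + U G_1)$. Collecting pieces gives $\|\sum_i H^i\|\le D/\alpha + M\gamma G G_1 + M\alpha U G_2(F_1+UG_1)$.

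Third, I would invoke the stepsize schedule already pinned down by Lemma~\ref{lemma:mul} (where the $\gamma^2 G^2/\epsilon$, $4\gamma F_1 D/\epsilon$, and $D^2/(2\alpha\epsilon)$ terms fix the scaling of $\gamma$ and $\alpha$ in $M$) and verify that $D/\alpha$, $M\gamma$, and $M\alpha U$ all reduce to $\mathcal{O}(M^{1/2})$, which yields $RL(M)\le \mathcal{O}(M^{1/2})$. The main obstacle I expect is the tight coupling between $U$ and the stepsizes: because $U$ grows like $D^2/(2\alpha\epsilon)$, shrinking $\alpha$ to kill the telescoping term $D/\alpha$ enlarges $U$ and hence the smoothness piece $M\alpha U^2 G_1 G_2$. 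Threading the needle so that all three pieces simultaneously hit the $M^{1/2}$ rate, while remaining consistent with the activation condition $\|\mu^i\|\ge \gamma^2 G^2/\epsilon + 4\gamma F_1 D/\epsilon + D^2/(2\alpha\epsilon)$ that triggers Lemma~\ref{lemma:step}, is the delicate bookkeeping step.
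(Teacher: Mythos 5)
Your decomposition takes a genuinely different route from the paper's. The paper first recasts the primal update \eqref{eq:primal_update} as a projected gradient step $d^{i+1}=\mathcal{P}_{\mathcal{D}_t}[d^i-\alpha H^i]$ (its Lemma~\ref{lemma:proj}), invokes a stationarity relation $0\in\nabla L_w(d^i)+(\mu^i)^\top\nabla g^i(d^i)$, inserts $H^{i+1}$, telescopes the loss-gradient differences to get a $2F_1$ boundary term, and controls the remainder by $L_2\sum_i\|\mu^i\|\,\|d^i-d^{i+1}\|\le L_2DM\cdot U$ using Assumption~\ref{asp:lip}, the crude bound $\|d^i-d^{i+1}\|\le D$ from Assumption~\ref{asp:boundd}, and Lemma~\ref{lemma:mul}. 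You instead read $H^i$ off the proximal optimality condition, so your telescoping acts on the iterates $(d^{i+1}-d^i)/\alpha$ rather than on the gradients, and your error terms are the dual drift and the Hessian-controlled variation of $\nabla g^i$. Your route is cleaner in one respect: it avoids the paper's questionable application of $L_2$-smoothness \emph{across} two different constraint functions in the difference $\nabla g^i(d^i)-\nabla g^{i+1}(d^{i+1})$. Both arguments equally ignore the normal-cone term of the constrained minimizer, so I will not count that against you.

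The gap is in your third step: the rate verification fails, concretely, in two places. First, Lemma~\ref{lemma:mul} is proved with $\gamma=M^{-1/4}$ (that is where its $M^{-1/4}$ and $M^{-1/2}$ factors come from), so your dual-drift piece is $M\gamma GG_1=GG_1M^{3/4}$, which is not $\mathcal{O}(M^{1/2})$; "$M\gamma$ reduces to $\mathcal{O}(M^{1/2})$" is simply false under the schedule you invoke (it could be repaired by taking $\gamma=M^{-1/2}$, which only strengthens Lemmas~\ref{lemma:step} and~\ref{lemma:mul}). Second, and worse, your smoothness piece cannot be threaded at all with these ingredients: since the multiplier bound $U$ from Lemma~\ref{lemma:mul} contains the term $D^2/(2\alpha\epsilon)$, the product $M\alpha UG_2F_1$ is at least $MD^2G_2F_1/(2\epsilon)=\Theta(M)$ \emph{for every choice of $\alpha$} --- the factor $\alpha$ from your step-length estimate $\|d^{i+1}-d^i\|\le\alpha(F_1+\|\mu^{i+1}\|G_1)$ exactly cancels the $1/\alpha$ inside $U$. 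So the "delicate bookkeeping" you defer is not merely delicate; with this step-length bound it is impossible, and you would need to fall back on $\|d^{i+1}-d^i\|\le D$ (as the paper does) together with $\alpha=\Omega(M^{1/2})$ to push that piece down to $\mathcal{O}(M^{3/4})$. For candor: the paper's own final display has the same defects --- it contains terms of order $L_2DM\cdot 4M^{-1/4}F_1D/\epsilon=\Theta(M^{3/4})$ and $L_2DM\cdot D^2/(2\alpha\epsilon)=\Theta(M/\alpha)$ that are declared $\mathcal{O}(M^{1/2})$ without fixing $\alpha$ --- but your proposal as written does not close this gap either, and its explicit bound $D/\alpha+M\gamma GG_1+M\alpha UG_2(F_1+UG_1)$ does not establish the stated rate.
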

\begin{proof}
     The detail of the proof can be found in Appendix \ref{app:lag}.
\end{proof}


\textbf{Remark:} From Theorem \ref{the:lag}, it is evident that the sample selection algorithm can attain a $\mathcal{O}(M^{1/2})$ local regret bound. This implies that the Lagrangian function may converge to zero when $M$ is sufficiently large. However, ensuring full convergence of the Lagrangian isn't suitable for our sample selection objectives. Instead, we restrict the maximum iterations in our algorithm to $M_{max}$, thereby setting a limit on the local Lagrangian regret with a threshold of $\mathcal{O}(\sqrt{M_{max}})$. This aligns partially with previous sample selection methods that use the task-specific estimated clean ratios as thresholds to differentiate 'good' and 'bad' samples. Notably, our OGRS method eliminates the need for such specifications. With a fixed maximum iteration setting, OGRS can handle tasks with diverse noisy training data or especially online training tasks with dynamically changing clean ratios. 

\section{Experimental Results}

In this section, we evaluate the performance of our proposed OGRS method. As other multi-round sample selection algorithms utilize a similar concept of estimating the clean ratio of the training dataset, our focus is primarily on a representative work titled ITLM (section \ref{sec:itlm}). Initially, we present intuitive synthetic results to demonstrate the performance of the OGRS method and compare it with the other methods under different parameter settings. Subsequently, we compare these methods using several real datasets, under both static and dynamically changing clean ratios.

Our experiment primarily examines random label error scenarios, wherein a certain proportion $1-\phi$ of data samples are randomly, independently, and equally likely mislabeled, where $\phi$ represents the real clean ratio of the current arrived data. Despite deep learning models' ability to automatically fit these erroneous data, our experiments reveal that our methods outperform both naive training and state-of-the-art methods when dealing with noisy datasets. Code is available at \url{https://github.com/AnonymousSubmission100/OGRS_NeurIPS/tree/main}.
\begin{figure}[t]
    \centering
    \caption{The left four figures shows the loss $\ell_{\theta_T}(d)$ after the training process, where a well-trained model should correctly distinguish clean and bad samples by assigning different loss. The right figure shows the averaged test accuracy along the time slots $t$. }
    \label{fig:syn}
    \includegraphics[width=1\linewidth]{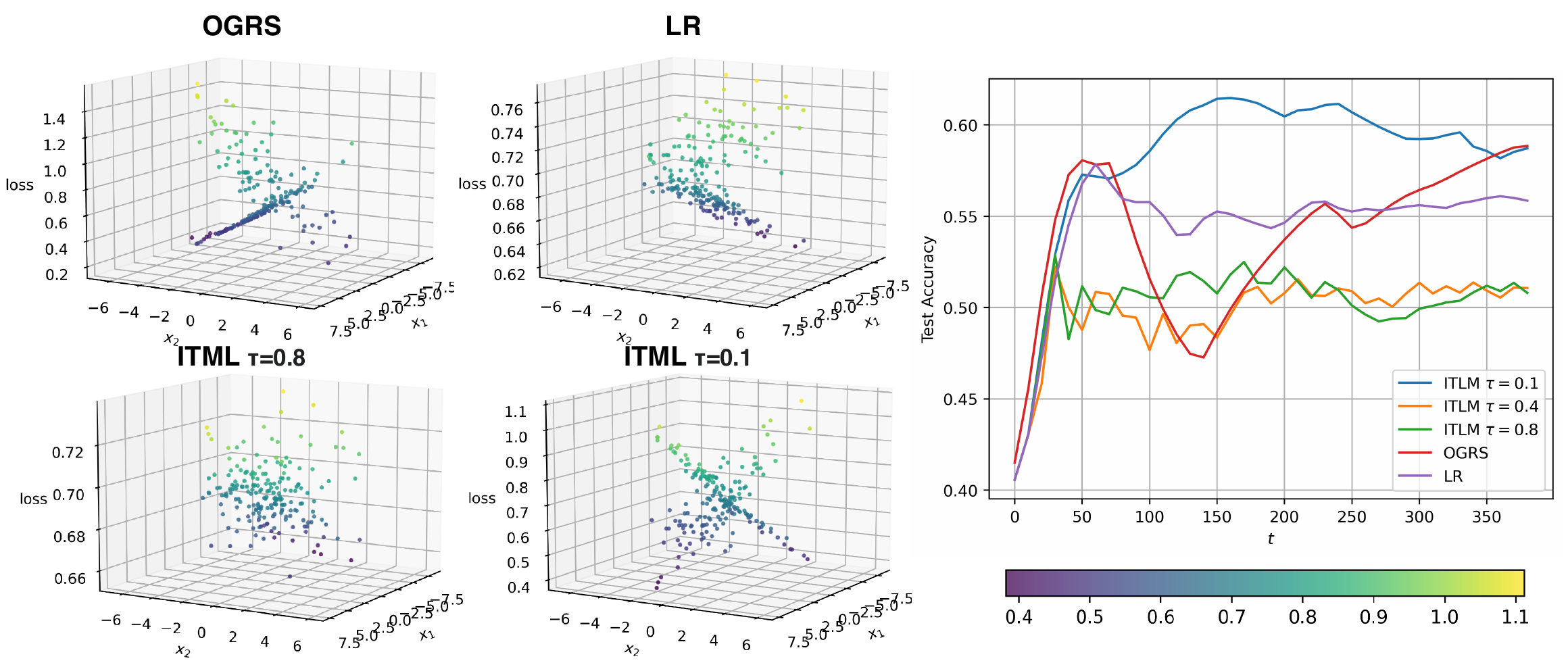}
\end{figure}
\subsection{Synthetic Experiments}
In this section, we evaluate performance on a synthetic dataset of 300 samples, delineated into two non-sensitive features $(x_1, x_2)$ and one label class $y$. The dataset, visualized in a 3D scatter plot, is partitioned into a training set of 200 samples and a test set. Samples follow a Gaussian mixture distribution $(x_1, x_2) | y=1~\mathcal{N}([1, 1]:[5,1;1, 5])$ and $(x_1, x_2) | y=0~\mathcal{N}([-1, -1]:[10,1;1, 3])$. We artificially flip $40\%$ of training labels and apply the logistic regression (LR) model.

Our Optimal Gaussian Robustness Scoring (OGRS) method is compared against naive LR and ITLM, varying the pre-estimated clean ratio $\hat{\phi}$. Unlike other methods, OGRS's parameters remain constant across tasks. Differing $\hat{\phi}$ values simulate misestimation of the clean ratio with dynamically changing data. Results are detailed in Fig. \ref{fig:syn}.

Training data loss is depicted in four left-hand figures, where the axes $(x_1, x_2)$ represent features and the third axis signifies training loss $\ell_{\theta_T}((x_1, x_2))$ post-training. ITLM with $\hat{\phi} = 0.1$ outperforms other methods, as underestimation of $\hat{\phi}$ enhances ITLM's selection accuracy. That's only happening in some simple tests like in this section. When the task becomes more and more complex, an underestimation of $\hat{\phi}$ may seriously degrade the performance as it misses a lot of valuable data points. Naive LR and ITLM with higher $\hat{\phi}$, however, struggle to differentiate good and bad data, whereas OGRS matches ITLM's performance without requiring a pre-estimated clean ratio.

Test accuracy for different methods is shown on the right, revealing that after an initial drop post the first 50 warm start rounds, OGRS quickly recovers and matches the best ITML case. In contrast, ITLM with incorrect parameters underperforms naive LR due to sample misselection and valuable data omission.

\subsection{Experiment on Real Datasets}\label{sec:real}

In this subsection, we evaluate the robustness of the OGRS method against label-randomized error in multi-dimensional, multi-error ratio datasets using different models. We employ the MNIST and CIFAR-10 datasets, testing varying clean ratios from 30\% to 70\%. A 2-layer Multi-layer Perceptron (MLP) is utilized for MNIST, while CIFAR-10 is tested with the widely used ResNet-18 model. Four training methods are compared:
\begin{itemize}
\item OGRS: Our method with \textbf{fixed parameters} across all experiments.
\item ITLM: A representative algorithm for multi-round sample selection, tested under different $\hat{\phi}$. We introduce the way to transfer ITLM to an online training setting in section \ref{sec:itlm}.
\item Naive: Directly training using all samples. 
\item Oracle: Training using only clean samples. Note that the result can only reach around 0.75 for ResNet-18 since we randomly select samples in each time slot $t$, which is different from the offline training case.
\end{itemize}

In this section, we compare various training algorithms while maintaining a constant real clean ratio $\phi$ since holding $\phi$ fixed allows for clear and plausible experimental analysis. Experiments are conducted under varying parameter configurations to highlight the advantage of OGRS, which does not necessitate a pre-estimated clean ratio. The models are trained for 10,000 rounds for experiments on both MNIST and CIFAR-10, each starting with a 500-round warm-up period using naive training.

Table \ref{tab:com} presents the results on both MNIST and CIFAR-10 datasets. In the majority of the tests, our Online Gradient-based Robust Sample selection (OGRS) method outshines the Iterative Training with Loss Minimization (ITLM) method, except for the test where $\phi=0.7$ and the pre-estimated ratio for ITLM is set as $\hat{\phi} = 0.9$. This discrepancy arises due to the setting of $\hat{\phi} = 0.9$, which ensures that ITLM samples the top 10\% of low-loss data, thereby increasing the likelihood of selecting clean data. However, this setting also causes ITLM to overlook certain observations, which in turn leads to underwhelming performance in other settings.

A vital aspect of this experiment involves testing the ITLM methods under various $\hat{\phi}$ settings. When $\hat{\phi}$ approximates the actual clean ratio $\phi$, ITLM demonstrates robustness against label errors. However, when the discrepancy between these values becomes significant, the method experiences a substantial drop in accuracy. Specifically, ITLM fails to converge in certain tasks when applied to the CIFAR-10 dataset with $\phi=0.3$ and $\phi=0.7$, especially when the estimated value $\hat{\phi}$ significantly deviates from the true value $\phi$. Such a mismatch is a common occurrence in online training settings with fluctuating clean ratios, rendering the ITLM method less suited for online training with noisy labels.

Additional experiments can be found in the Appendix.

\begin{table}[t]
	\caption{Experimental results for MNIST and CIFAR-10 datasets, utilizing different models and varying ratios $\phi$ of clean samples. To facilitate a clearer comparison of the results, the training set $\mathcal{D}_t$ for each time slot was kept constant with certain ratios of clean data. The optimal result for each setting is highlighted in bold, and 'N/A' is used to denote instances where the test did not converge to a definitive result.}
	\centering
 \label{tab:com}
	\vspace{2mm}
	\begin{tabular}{cccccccc}
		\toprule
		\multirow{2}{*}{Datasets and $\phi$} & \multicolumn{4}{c}{MNIST with MLP setup} & \multicolumn{3}{c}{CIFAR-10 with ResNet-18 setup}\\
			\cline{2-8}
		&0.9 & 0.7 & 0.5 & 0.3  & 0.9 & 0.7 & 0.5\\
		\hline
		OGRS & \textbf{0.9273} & 0.8952& \textbf{0.8886} & \textbf{0.8607} & 0.6502 & \textbf{0.5827} & \textbf{0.5149} \\
		Baseline(naive) &  0.9035 & 0.8849 & 0.8479 &  0.8098 & \textbf{0.6913} & 0.5514 & 0.3456  \\
		Oracle & 0.9208 & 0.9225 & 0.9132 & 0.8852 & 0.7502 & 0.7436 & 0.7448 \\
		ITLM ($\hat{\phi} = 0.9$)&0.9188& \textbf{0.9133}& 0.8881& 0.7742& 0.6281& 0.4466& 0.3726\\
		ITLM ($\hat{\phi} = 0.7$)& 0.7459 & 0.8325 & 0.7374 & 0.6843 & 0.5646 & N/A & 0.3105 \\
		ITLM ($\hat{\phi} = 0.5$)& 0.5654 & 0.5744 & 0.6635 & 0.5638 &0.5331 & N/A & N/A \\
            ITLM ($\hat{\phi} = 0.3$)& 0.3987 & 0.4021 & 0.3707 & 0.4003 & 0.5491 & 0.4006& N/A\\
		\bottomrule
	\end{tabular}
	\label{tab:my_label}
\end{table}

\section{Conclusion}
In this paper, we introduce a novel gradient-based sample selection method to first enable large-scale online robust training with varying proportions of noisy labels, which is a flexible method for training with noisy labels that can be added at the beginning of each iteration. We formulate the sample selection challenge as a non-convex constrained optimization problem and propose an efficient algorithm to address it. In order to give the theoretical analysis for our OGRS method, we introduce a novel metric called the local Lagrangian regret metric. We are the first to directly establish a sublinear local regret bound without resorting to the approximation of the objective function. Experimental results demonstrate that our proposed methods outperform alternatives, particularly when the pre-estimated clean ratio is hard to ascertain.

Future work can be conducted to further address the fairness issue for online training with noisy labels. Given that fairness constraints can feasibly be incorporated into the constrained optimization aspect of OGRS, which make the future research about fairness training possible.

\clearpage
\appendix
\section{Proof of Lemma \ref{lemma:step}}\label{app:step}

\begin{proof}

We give the proof in two parts. First, we prove the first inequality in eq. \eqref{eq:step}, which shows $|\|\mu^{i+1}\| - \|\mu^i\|| \leq G \gamma$

Starting from the dual updating rule in eq. \eqref{eq:dual}, we have:
\begin{align}\label{eq:part_1}
    \|\mu^{i+1} - \mu^i\| & \overset{(a)}{=} \|[\mu^i + \gamma g^i(d^i)]^+ - [\mu^i]^+\| \\
    & \leq \gamma \|g^i(d^i)\| \\
    & \overset{(b)}{\leq } G \gamma,
\end{align}
where (a) is obtained by the non-expansion property of $[\cdot]^+$ and (b) is given by Assumption \ref{asp:bound}.

What remains is to prove the second inequality $|\mu^{i+i_0}| - |\mu^{i}| \leq -\frac{\epsilon i_0}{2}$ with the hypothesis of $\|\mu^i\| \geq \frac{\gamma^2  G^2}{\epsilon} + \frac{4\gamma F_1 D}{\epsilon} + \frac{D^2}{2\alpha\epsilon}$. For $\tau\in{i, i+1, \cdots, i+i_0}$, we begin by bounding the term $|\mu^{\tau+i_0}| - |\mu^{\tau}|$. Subsequently, we make a summation over the interval $[i, i + i_0]$ to finalize the proof. For an existing saddle point $\hat{d}$, we can employ the primal updating rule in eq. \eqref{eq:primal_update}, where $d^{\tau+1}$ is the minimizer of the equation. Consequently,
\begin{align}
    &\nabla L_w^\tau(d^\tau)^\top(d^{\tau+1} - d^\tau) + \mu^{\tau+1}g(d^\tau) + \frac{|d^{\tau+1} - d^\tau|^2}{2\alpha} \\
    & \overset{(a)}{=} \nabla L_w^\tau(d^\tau)^\top(d^{\tau+1} - d^\tau) + [\mu^\tau +\gamma g^\tau(d^\tau)]^+ g(d^\tau) + \frac{|d^{\tau+1} - d^\tau|^2}{2\alpha}\\
    &\leq \nabla L_w^\tau(d^\tau)^\top(\hat{d} - d^\tau) + \mu^{\tau+1}\nabla g(\hat{d}) + \frac{|\hat{d} - d^\tau|^2 - |\hat{d} - d^{\tau+1}|^2 }{2\alpha},
\end{align}

Where $(a)$ is obtained by the dual updating rule in eq. \eqref{eq:dual}. 

Rearrange, gives:
\begin{align}
    &\mu^{\tau} g^\tau (d^\tau ) \\
&\leq \nabla L_w^\tau (d^\tau )^\top(\hat{d} - d^\tau ) - \nabla L_w^\tau (d^\tau )^\top(d^{\tau +1} - d^\tau ) + \frac{|\hat{d} - d^\tau |^2 - |\hat{d} - d^{\tau +1}|^2 }{2\alpha} \\
&- \frac{|d^{\tau +1}- d^\tau |^2}{2\alpha} + \mu^{\tau +1} g^\tau (\hat{d}) - \gamma(g^\tau (d^\tau))^2\\
&\leq \|\nabla L_w^\tau(d^\tau)\|\|\hat{d} - d^\tau\| + \|\nabla L_w^\tau (d^\tau )\|\|d^{\tau +1} - d^\tau \| + \frac{|\hat{d} - d^\tau |^2 - |\hat{d} - d^{\tau +1}|^2 }{2\alpha} + \mu^{\tau +1} g^\tau (\hat{d})\\
& \overset{(a)}{\leq} 2F_1 D -\epsilon \mu^{\tau + 1} + \frac{|\hat{d} - d^\tau |^2 - |\hat{d} - d^{\tau +1}|^2 }{2\alpha},
\end{align}
where (a) is obtained by Assumption \ref{asp:salter}.

Now, let us revisit the dual update process in eq. \eqref{eq:dual}, by taking square on the updating rule, we have:

\begin{align}
    (\mu^{\tau +1})^2 & \leq (\mu^\tau  +\gamma   g^\tau (d^\tau ) )^2\\
    & \leq (\mu^\tau )^2 + (\gamma  g^\tau (d^\tau ))^2 + 2(\mu^\tau )^\top(\gamma  g^\tau (d^\tau ))
\end{align}

Rearrange the above equality gives:

\begin{align}
    (\mu^{\tau +1})^2 - (\mu^\tau )^2 & \leq (\gamma  g^\tau (d^\tau ))^2 + 2\gamma(\mu^\tau )^\top g^\tau (d^\tau )\\
    &\overset{(a)}{\leq} \gamma^2  \|g^\tau (d^\tau )\|^2 + 2\gamma[2F_1 D -\epsilon  \mu^{\tau + 1} + \frac{|\hat{d} - d^\tau |^2 - |\hat{d} - d^{\tau +1}|^2 }{2\alpha}]\\
    & \overset{(b)}{\leq} \gamma^2  G^2 + 2\gamma[2F_1 D -\epsilon \mu^{\tau + 1} + \frac{|\hat{d} - d^\tau |^2 - |\hat{d} - d^{\tau +1}|^2 }{2\alpha}],\label{eq:part}
\end{align}
where (a) is obtained by including eq. \eqref{eq:part}. Finally, taking summation over $\tau\in\{i, i+1, \cdots, i + i_0 - 1\}$:
\begin{align} 
&(\mu^{i+i_0})^2 - (\mu^i)^2 \\
&\leq \gamma^2  G^2 i_0 + 4\gamma F_1 D i_0 -2\epsilon \sum_{\tau = i}^{i+i_0-1} \mu^{\tau + 1} + \sum_{\tau = i}^{i+i_0-1}\frac{|\hat{d} - d^\tau |^2 - |\hat{d} - d^{\tau +1}|^2 }{2\alpha}]\\
&\overset{(a)}{\leq} \gamma^2  G^2 i_0 + 4\gamma F_1 D i_0 -2\epsilon \sum_{\tau = 0}^{i_0-1} [\mu^{i+1} - i_0 G y] + \sum_{\tau = i}^{i+i_0-1}\frac{|\hat{d} - d^\tau |^2 - |\hat{d} - d^{\tau +1}|^2 }{2\alpha}]\\
& \leq \gamma^2  G^2 i_0 + 4\gamma F_1 D i_0 -2\epsilon i_0 \|\mu^{i+1}\| + \frac{i_0 D^2}{2\alpha}\\
& \overset{(b)}{\leq} \gamma^2  G^2 i_0 + 4\gamma F_1 D i_0 -2\epsilon i_0 (|\mu^i\||+ G \gamma) + \frac{i_0 D^2}{2\alpha}\\
&=  \gamma^2  G^2 i_0 + 4\gamma F_1 D i_0 -2\epsilon i_0 \|\mu^i\|-2\epsilon i_0  G \gamma + \frac{i_0 D^2}{2\alpha}
\end{align}
where (a) and (b) are given by the proof in eq. \eqref{eq:part_1}, which gives $|\|\mu^{i+1}\| - \|\mu^i\|| \leq G \gamma$.

Rearrange, give:

\begin{align}
    (\mu^{i+i_0})^2 & \leq (\mu^i)^2 - \epsilon i_0 \|\mu^i\| + (\frac{\epsilon i_0}{2})^2 + \gamma^2  G^2 i_0 + 4\gamma F_1 D i_0\\
    &-2\epsilon i_0  G \gamma + \frac{i_0 D^2}{2\alpha} - (\frac{\epsilon i_0}{2})^2 - \epsilon i_0 \|\mu^i\| \\
    & \overset{(a)}{\leq}  (\mu^i)^2 - \epsilon i_0 \|\mu^i\| + (\frac{\epsilon i_0}{2})^2\\
    & \leq (\mu^i - \frac{\epsilon i_0}{2})^2
\end{align}
where (a) is given by employing the hypothesis that $\|\mu^i\| \geq \frac{\gamma^2  G^2}{\epsilon} + \frac{4\gamma F_1 D}{\epsilon} + \frac{D^2}{2\alpha\epsilon}$, which concludes the proof $\mu^{i+i_0} - \mu^i \leq -\frac{\epsilon i_0}{2}$.

\end{proof}
\section{Proof of Lemma \ref{lemma:mul}} 
\begin{proof}

\label{app:mul}
In this section, we try to bound the norm of the Lagrangian multiplier $\|\mu^i\|$.

The proof for this lemma is a variation of lemma 5 in \cite{yu2017online}. By using the conclusion proved in Lemma \ref{lemma:step}, we have:
\begin{align}
    & |\|\mu^{i+1}\| - \|\mu^i\|| \leq G\gamma\\ 
    & |\|\mu^{i+i_0}\| - \|\mu^i\|| \leq -\frac{\epsilon i_0}{2},
\end{align}
when $\|\mu^i\| \geq \frac{\gamma^2  G^2}{\epsilon} + \frac{4\gamma F_1 D}{\epsilon} + \frac{D^2}{2\alpha\epsilon} $.

Apply the lemma 5 of \cite{yu2017online} and we can obtain the bound for $|\mu^i|$ as:
\begin{align}
    |\mu^i| & \leq \frac{\gamma^2  G^2}{\epsilon} + \frac{4\gamma F_1 D}{\epsilon} + \frac{D^2}{2\alpha\epsilon} + i_0 G\gamma+ i_0\frac{8G^2\gamma^2}{\epsilon}\log[\frac{32G^2\gamma^2}{\epsilon^2}]\\
    &\overset{(a)}{\leq} \frac{M^{-\frac{1}{2}}  G^2}{\epsilon} + \frac{4M^{-\frac{1}{4}} F_1 D}{\epsilon} + \frac{D^2}{2\alpha\epsilon} + i_0 G M^{-\frac{1}{4}}+ i_0\frac{8G^2M^{-\frac{1}{2}}}{\epsilon}\log[\frac{32G^2M^{-\frac{1}{2}}}{\epsilon^2}]\\
    & \leq \mathcal{O}(M^{-\frac{1}{2}})
\end{align}
where (a) is obtained by taking a shrinking stepsize $\gamma = M^{-1/4}$.
\end{proof}

\section{Proof of Theorem \ref{the:lag}}\label{app:lag}
\begin{proof}

We begin the proof by giving a lemma that shows the updating rules in eq. \eqref{eq:primal_update} can be formulated as a projection gradient descent step:
\begin{lemma}\label{lemma:proj}
    Set $H^i = \nabla L_w + (\mu^i)^\top\nabla g^i(d^i)$, the update rule in eq. \eqref{eq:primal_update} can be given by:
    \begin{align}\label{eq:proj}
        d^{i+1} = \mathcal{P}_{\mathcal{D}_t}[d^i - \alpha H^i]
        \end{align}
\end{lemma}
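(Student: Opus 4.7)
The plan is to identify the primal update in eq. \eqref{eq:primal_update} with a projected gradient step through two elementary manipulations: linearize the constraint term, then complete the square. First, I would replace $g^i(d)$ by its first-order Taylor expansion at $d^i$, namely $g^i(d^i) + \nabla g^i(d^i)^\top(d - d^i)$; after substituting into eq. \eqref{eq:primal_update} and discarding terms independent of $d$, the objective reduces to
\begin{align*}
\bigl[\nabla L_w + (\mu^i)^\top \nabla g^i(d^i)\bigr]^\top (d - d^i) + \frac{\|d - d^i\|^2}{2\alpha} \;=\; (H^i)^\top (d - d^i) + \frac{\|d - d^i\|^2}{2\alpha},
\end{align*}
which is exactly a linearization of the local Lagrangian $\mathcal{L}^i_w$ about $d^i$ plus the proximal regularizer.

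Second, I would complete the square, rewriting this quadratic as $\tfrac{1}{2\alpha}\bigl\|d - (d^i - \alpha H^i)\bigr\|^2 + \text{const}$. Minimizing over $d \in \mathcal{D}_t$ is then the definition of the Euclidean projection of the unconstrained gradient step $d^i - \alpha H^i$ onto the feasible set $\mathcal{D}_t$, which yields $d^{i+1} = \mathcal{P}_{\mathcal{D}_t}[d^i - \alpha H^i]$ as claimed. Both steps are one-liners once the substitution is made, so no delicate estimates are required to close the argument.

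The main point to be careful about is the transition from $g^i(d)$ (as written inside eq. \eqref{eq:primal_update}) to its linearization at $d^i$, together with the index mismatch between $\mu^{i+1}$ in eq. \eqref{eq:primal_update} and $\mu^i$ in the definition of $H^i$. The linearization is the standard convention in proximal saddle-point methods: the regularizer $\|d - d^i\|^2/(2\alpha)$ keeps the iterate local, so the Taylor remainder is of higher order in $\|d - d^i\|$ and is absorbable into the proximal term. The index mismatch should be read as a notational shorthand of the algorithm; if it needs to be made rigorous, I would invoke Lemma \ref{lemma:step} to replace $\mu^{i+1}$ by $\mu^i + \mathcal{O}(\gamma)$, and the resulting perturbation of $H^i$ is of the same order as the step size and hence inconsequential for identifying the update as a projected gradient step.
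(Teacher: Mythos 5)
Your overall skeleton---isolate the linear-plus-proximal part of the objective in eq.~\eqref{eq:primal_update}, complete the square, and recognize the minimizer as $\mathcal{P}_{\mathcal{D}_t}[d^i - \alpha H^i]$---is the same identification the paper makes, and the completion-of-the-square step itself is fine. The gap is in how you pass from the term $(\mu^{i+1})^\top g^i(d)$ that actually appears in eq.~\eqref{eq:primal_update} to the linearized term $(\mu^i)^\top \nabla g^i(d^i)^\top(d-d^i)$ that appears in $H^i$. You do this by Taylor-expanding $g^i$ at $d^i$ and asserting the remainder is ``of higher order in $\|d-d^i\|$ and absorbable into the proximal term.'' That argument cannot deliver the exact equality the lemma asserts: adding a non-constant higher-order perturbation to the objective moves the argmin, so at best you obtain that $d^{i+1}$ is \emph{approximately} a projected gradient step, with an error you have not quantified. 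The same objection applies to your handling of the $\mu^{i+1}$-versus-$\mu^i$ mismatch: replacing $\mu^{i+1}$ by $\mu^i+\mathcal{O}(\gamma)$ perturbs $H^i$ and hence the projected point, and ``inconsequential'' needs to be an actual bound if the claim is an identity.

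The paper treats the constraint term differently: instead of a Taylor expansion it invokes convexity of $g^i$, i.e.\ $\nabla g^i(d^i)^\top(d-d^i) \le g^i(d)-g^i(d^i)$ together with $\mu^i\ge 0$, so the linearized objective one-sidedly bounds the true one, and the proof is a chain of inequalities between the two minimization problems rather than an approximation argument. (To be candid, the paper's own proof is also loose---an inequality between optimal values does not by itself show the minimizers coincide, and it too silently swaps $\mu^{i+1}$ for $\mu^i$---but it rests on a structural property of $g^i$ rather than an unquantified remainder.) If you want to close your version, the clean route is to observe that the identity is exact precisely when the constraint term entering the primal update is already the linearization (equivalently, when $g^i$ is affine in $d$), and otherwise to prove an approximate statement with an explicit error controlled by the curvature bound $G_2$ from Assumption~\ref{asp:bound} and the step length $\|d^{i+1}-d^i\|$.
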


\begin{proof}
    By the definition of the projection, eq. \eqref{eq:proj} is equal to the following equation:
    \begin{align}
        &\min_{d\in\mathcal{D}_t}\|d - (d^i - \alpha H^i )\|^2\\
         &= \min_{d\in\mathcal{D}_t}\|(d - d^i) +\alpha(  \nabla L_w + \mu^i\nabla g^i(d^i) )\|^2\\
         &\overset{(a)}{\leq} \min_{d\in\mathcal{D}_t}\|(d - d^i)\|^2 +2 \alpha( \nabla L_w + (\mu^i)^\top\nabla g^{i}(d^i))^\top (d - d^i)\\
         &\overset{(b)}{\leq}\min_{d\in\mathcal{D}_t} \frac{1}{2\alpha}\|(d - d^i)\|^2  + \nabla L^{i}_w (d - d^i) + g^{i}(d^i)^\top (d - d^i)\\
         & \overset{(c)}{\leq} \min_{d\in\mathcal{D}_t} \frac{1}{2\alpha}\|(d - d^i)\|^2  + \nabla L^{i}_w (d - d^i) + 
         (\mu^i)^\top (g^{i}(d)  - g^{i}(d^i))\\
         & \leq \min_{d\in\mathcal{D}_t} \frac{1}{2\alpha}\|(d - d^i)\|^2  + \nabla L^{i}_w (d - d^i) + 
         (\mu^i)^\top g^{i}(d),
    \end{align}
    where (a) is obtained by ignoring the term $(H^{i})^2$, (b) is given by multiple $1/2\alpha$ on all terms and (c) follows the convexity of the constraints functions.
\end{proof}

 By using the result in Lemma. \ref{lemma:proj}, we have:
 \begin{align}
      d^{i+1} &= \min_{d \in \mathcal{D}_t} \nabla^\top L^{i}_w (d - d^i) + (\mu^i)^\top g^{i}(d) + \frac{|d - d^i|^2}{2\alpha}\\
      &= \mathcal{P}_{\mathcal{D}_t}[d^i - \alpha(\nabla L_w(d^i)+ (\mu^i)^\top\nabla g^i(d))] \label{eq:first}
 \end{align}

Next, we establish the bound for the norm of the term $|H^i| = |\nabla L_w + \mu^i\nabla g^i(d^i)|$, which is associated with both local regret and local constraints. We begin with KKT stationary condition for the projected gradient descent step found in eq. \eqref{eq:first}. This can be expressed as follows:
\begin{align}\label{kkt}
    0 \in \nabla L_w(d^i)+(\mu^{i})^\top \nabla g^i(d^i)
\end{align}

Rewrite (\ref{kkt}) with $H^{i+1}$ as:

\begin{align}
     H^{i+1} - H^{i+1} + \nabla L_w(d^i)+\mu^{i} \nabla g^i(d^i)= 0
\end{align}

Moving one $H^i$ term to another side and combining another $H^i$ term with the other parts. By taking summation over $t^i\in[1, M]$, we give:
\begin{align}
    -\sum_{i=1}^M H^i & \leq \sum_{i=1}^M (\nabla L_w(d^i) - \nabla L^{i+1}(d^{i+1})) + \sum_{i=1}^M \mu^{i} (\nabla g^i(d^i) - \nabla g^{i+1}(d^{i+1}))\\
     & = (\nabla L^1(d^1) - \nabla L^{M+1}(d^{M+1})) + \sum_{i=1}^M \mu^{i} (\nabla g^i(d^i) - \nabla g^{i+1}(d^{i+1}))
\end{align}

Taking norm on both sides:
\begin{align}
    \|\sum_{i=1}^M H^i\| & \leq 
      \|\nabla L^1(d^1)\| +\| \nabla L^{M+1}(d^{M+1})\| + \sum_{i=1}^M \|\mu^{i}\| \|\nabla g^i(d^i) - \nabla g^{i+1}(d^{i+1})\| \\
      & \overset{(a)}{\leq} 2 F_1 + \sum_{i=1}^M \|\mu^{i}\| \|\nabla g^i(d^i) - \nabla g^{i+1}(d^{i+1})\| \\ 
      & \overset{(b)}{\leq} 2 F_1 + L_2\sum_{i=1}^M \|\mu^{i}\| \|d^i - d^{i+1}\|\\
      & \overset{(c)}{\leq} 2 F_1 + L_2D M(\frac{M^{-\frac{1}{2}}  G^2}{\epsilon} + \frac{4M^{-\frac{1}{4}} F_1 D}{\epsilon} + \frac{D^2}{2\alpha\epsilon} \\
      & + i_0 G M^{-\frac{1}{4}}+ i_0\frac{8G^2M^{-\frac{1}{2}}}{\epsilon}\log[\frac{32G^2M^{-\frac{1}{2}}}{\epsilon^2}])\\
      & \leq \mathcal{O}(M^{\frac{1}{2}})
\end{align}
1where (a) is obtained by Assumption \ref{asp:bound}, (b) is using the $L_2$-smoothness of the function $g^i(d^i)$ in Assumption \ref{asp:lip}, (c) is using the Lemma \ref{lemma:mul} and Assumption \ref{asp:bound} again.
\end{proof}

\section{Experiments with Dynamic Changing Dataset}
In this section, we compare the proposed OGRS method with the previously sorting-based ITLM methods with a dynamically changing dataset. We modified the CIFAR-10 dataset to conduct the experiment, by randomly selecting 20000 data samples and dividing them into four parts of subsets, each with 5000 data samples. Then, we randomized labels of a certain clean ratio of data in each part of the subset. In this experiment, we set the clean ratios $\phi$ for the four subsets as 0.1, 0.3, 0.2, and 0.15. We feed the same subset into the robust training system every 5000 iterations. For example, we use the subset with $\phi=0.1$ for the training time slot $t\in[1, 5000]$ and use the subset with $\phi=0.3$ for the training time slot $t\in[5000, 10000]$, etc. We record test accuracy every 5000 rounds and the test results are summarized in Fig. \ref{fig:dyc}.

During the training process, it is important to note that all parameters for each method remained fixed at their initial values. As a result, both methods exhibited performance that was inferior to the fixed clean ratio case. This outcome can be attributed to the fact that the model was unable to fully adapt and train effectively under dynamically changing data conditions. In Section \ref{sec:real}, we also highlighted a similar scenario where the online setting's dynamic nature limited the selection of data to random sampling rather than dividing the dataset into predefined batches. This constraint also contributed to the suboptimal performance of all online sample selection systems. Despite the challenges posed by the dynamically changing clean ratio, it is worth noting that OGRS demonstrated a higher level of stability and achieved better test accuracy compared to ITLM. This observation indicates the adaptability and resilience of the OGRS method in the face of fluctuating clean ratios.

\begin{figure}[!h]
    \centering
    \caption{Comparison between the OGRS and ITLM with dynamically changing datasets.}
    \label{fig:dyc}
    \includegraphics[width=0.5\linewidth]{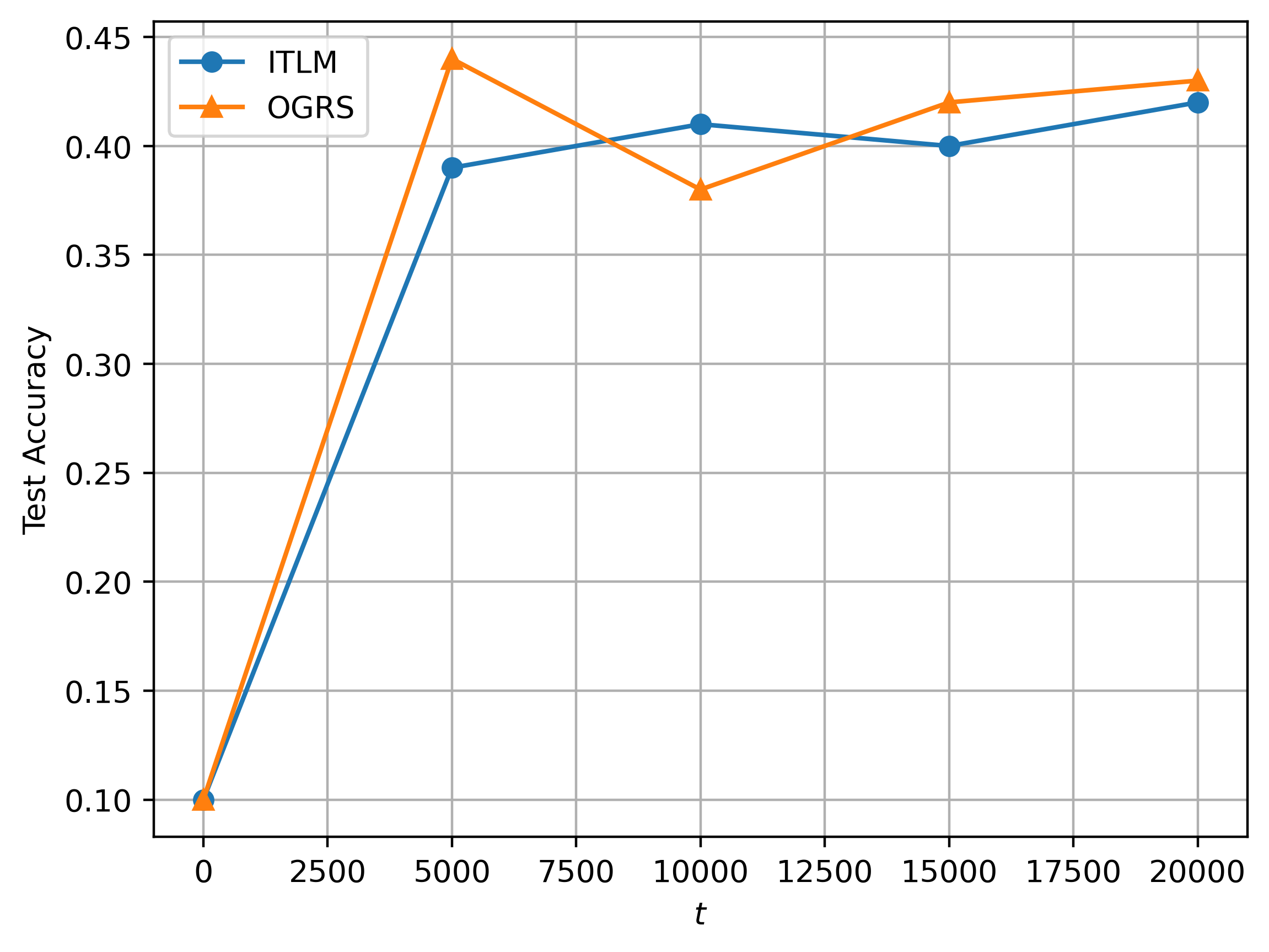}
    \vspace{-10mm}
\end{figure}

\end{document}